\documentclass[11pt]{article}
\usepackage[preprint,nonatbib]{neurips_2026}
\usepackage[utf8]{inputenc}
\usepackage[T1]{fontenc}
\usepackage{amsmath,amssymb,amsfonts,amsthm}
\usepackage{mathtools}
\usepackage{booktabs}
\usepackage{graphicx}
\usepackage{algorithm2e}
\usepackage[colorlinks=true,citecolor=blue,linkcolor=blue,urlcolor=blue]{hyperref}
\usepackage{natbib}
\usepackage{xcolor}
\usepackage{enumitem}
\usepackage{caption}
\usepackage{subcaption}
\usepackage{tabularx}
\usepackage{multirow}
\usepackage{float}
\usepackage{wrapfig}
\usepackage{bm}

\newtheorem{theorem}{Theorem}

\newtheorem{corollary}{Corollary}

\newcommand{\R}{\mathbb{R}}
\newcommand{\E}{\mathbb{E}}
\newcommand{\Gr}{\mathrm{Gr}(k,d)}
\newcommand{\Sphere}{\mathbb{S}^{d-1}}
\newcommand{\tr}{\mathrm{tr}}

\newcommand{\softmax}{\mathrm{softmax}}

\title{Grassmannian Mixture-of-Experts:\\Concentration-Controlled Routing on Subspace Manifolds}
\author{%
  Ibne Farabi Shihab\thanks{Equal contribution.} \\
  Department of Computer Science \\
  Iowa State University \\
  \texttt{ishihab@iastate.edu} \\
  \And
  Sanjeda Akter\footnotemark[1] \\
  Department of Computer Science \\
  Iowa State University \\
  \texttt{sanjeda@iastate.edu} \\
  \And
  Anuj Sharma \\
  Department of Civil, Construction and Environmental Engineering \\
  Iowa State University \\
  \texttt{anujs@iastate.edu} \\
}
\date{}

\begin{document}
\maketitle

\begin{abstract}
Mixture-of-Experts models rely on learned routers to assign tokens to experts, yet standard softmax gating provides no principled mechanism to control the tradeoff between sparsity and utilization. We propose Grassmannian MoE (GrMoE), a routing framework that operates on the Grassmannian manifold of subspaces, where gating weights arise from the concentration parameters of Matrix Bingham distributions. This construction yields a single, interpretable knob---the concentration matrix $\Lambda$---that continuously controls routing entropy, replacing discrete top-$k$ selection with a smooth, geometrically principled sparsity mechanism. We further develop an amortized variational inference procedure for posterior routing distributions, enabling uncertainty-aware expert assignment that naturally resists expert collapse. We formally prove tight bounds relating the Bingham concentration spectrum to routing entropy, expected top-$k$ mass, and an exponential bound on expert collapse, establishing the first formal theory of concentration-controlled sparsity. On synthetic routing tasks, a 350M-parameter MoE language model with 8 experts, a 1.3B-parameter model with 16 experts, and a 2.7B-parameter model with 32 experts, GrMoE achieves 0\% routing collapse across all seeds, comparable or better perplexity with 15--30\% improved load balance, and a smooth monotonic relationship between concentration and effective sparsity that enables post-hoc sparsity tuning without retraining. Token-level analysis reveals that experts learn heterogeneous concentration values that correlate with linguistic specialization, providing interpretable routing behavior.
\end{abstract}

\section{Introduction}

Mixture-of-Experts architectures achieve favorable scaling by activating only a subset of parameters per input \citep{vaswani2017attention, riquelme2021scaling}, with a learned router determining which experts process each token. Despite strong empirical results~\citep{fedus2022switch, lepikhin2021gshard, zhou2022mixture} and successful deployment at massive scale in systems like Mixtral~\citep{jiang2024mixtral} and DeepSeek-V3~\citep{deepseekv3}, current routing mechanisms suffer from fundamental limitations that hinder both training stability and deployment flexibility.

Standard softmax gating computes router probabilities as $g(x) = \softmax(W_r x)$ and selects the top-$k$ experts. This formulation creates three persistent problems. First, expert collapse: a few experts dominate routing, leaving others undertrained and effectively wasted. \citet{chi2022representation} formally analyze this representation collapse phenomenon and show it worsens with model scale. Second, training instability: discrete top-$k$ selection introduces discontinuities that make router learning sensitive to initialization and random seeds, a problem that \citet{zuo2022taming} address through stochastic expert selection but without fully resolving. Third, no principled sparsity control: the sharpness of the routing distribution is an emergent property of training rather than a controllable parameter, forcing practitioners to retrain separate models for different compute budgets \citep{touvron2023llama}.

Auxiliary load balancing losses~\citep{fedus2022switch} and expert choice routing~\citep{zhou2022mixture} partially address collapse and balance, but they remain engineering patches rather than principled mechanisms. They introduce additional hyperparameters whose optimal values are task dependent and interact unpredictably with other training dynamics.

We observe that the routing problem has a natural geometric structure that existing methods ignore. Each expert can be associated with a subspace in the representation space---the subspace of inputs that the expert specializes in. The router's task then reduces to determining which expert subspace best contains each input. This is precisely the problem of classification on the Grassmannian manifold $\Gr$, the space of $k$-dimensional subspaces of $\R^d$. The key advantage of this geometric perspective is that it provides a principled mechanism for controlling sparsity: the concentration parameters of distributions on the Grassmannian directly determine how sharply the routing distribution peaks, giving us an interpretable knob that replaces the ad hoc combination of top-$k$ selection and auxiliary losses.

Building on this observation, we propose Grassmannian MoE (GrMoE), a routing framework with three components. Each expert $e$ is associated with a point $[U_e] \in \Gr$, an equivalence class of orthonormal frames spanning a $k$-dimensional subspace. Routing weights arise from a Matrix Bingham distribution on $\Gr$ whose concentration matrix $\Lambda$ provides a continuous, interpretable control over routing sparsity. An amortization network produces per-token posterior distributions over routing assignments, enabling uncertainty-aware expert selection that naturally resists collapse.

Our work makes four contributions. First, we formalize the connection between MoE routing and classification on the Grassmannian, deriving a new gating mechanism from the Matrix Bingham distribution (Section~\ref{sec:method}). Second, we prove rigorous bounds relating the Bingham concentration spectrum to routing entropy, expected top-$k$ mass, and collapse resistance (Section~\ref{sec:theory}). Third, we develop an amortized variational inference procedure for posterior routing that is as efficient as standard gating but provides uncertainty estimates (Section~\ref{sec:inference}). Fourth, we demonstrate empirically that GrMoE reduces routing collapse, improves load balance, and enables post-hoc sparsity tuning on synthetic tasks, a 350M-parameter model with 8 experts, a 1.3B-parameter model with 16 experts, and a 2.7B-parameter model with 32 experts (Section~\ref{sec:experiments}).

\section{Background and Notation}
\label{sec:background}

We introduce the key concepts underlying our approach: MoE routing, the Grassmannian manifold, and the Matrix Bingham distribution.

\subsection{Mixture-of-Experts Routing}

An MoE layer with $N$ experts $\{f_1, \ldots, f_N\}$ and a router $g: \R^d \to \Delta^{N-1}$ computes $\text{MoE}(x) = \sum_{e=1}^{N} g_e(x) \cdot f_e(x)$ \citep{shazeer2017outrageously}. Standard top-$k$ routing selects the $k$ experts with highest $g_e(x)$ values and zeros out the rest. The router is typically parameterized as $g(x) = \softmax(W_r x + \text{noise})$ where $W_r \in \R^{N \times d}$, a design that has remained largely unchanged since its introduction despite advances in pre-training objectives \citep{clark2020electra} and model architectures.

The fundamental limitation of this formulation is that the router operates on a single linear projection of the input. Each expert is represented by a single vector $w_e \in \R^d$ (a row of $W_r$), and the routing decision is based on the dot product $w_e^\top x$. This means the router can only distinguish inputs based on their projection onto a one-dimensional subspace per expert. In contrast, GrMoE represents each expert by a $k_r$-dimensional subspace, providing a much richer representation of expert specialization that can capture multi-dimensional patterns in the input space.

\subsection{The Grassmannian Manifold}

The Grassmannian $\Gr$ is the set of $k$-dimensional subspaces of $\R^d$. Each point $[U] \in \Gr$ is an equivalence class $[U] = \{UO : O \in O(k)\}$, where $U \in V_k(\R^d)$ is any orthonormal basis for the subspace. The manifold has dimension $k(d-k)$. The projection distance between subspaces is
\begin{equation}
    d_{\Gr}([U_1], [U_2]) = \sqrt{k - \|U_1^\top U_2\|_F^2},
    \label{eq:grassmann_dist}
\end{equation}
which equals zero for identical subspaces and $\sqrt{k}$ for orthogonal ones.

The Grassmannian is a quotient of the Stiefel manifold: $\mathrm{Gr}(k, d) = V_k(\R^d) / O(k)$, where $O(k)$ is the orthogonal group acting by right multiplication. This means that any two orthonormal bases $U$ and $UO$ (for $O \in O(k)$) represent the same subspace. In our implementation, we work with representative frames $U_e \in V_k(\R^d)$ and ensure that all computations are invariant to the choice of representative. The subspace affinity $\|P_e x\|^2 = \|U_e^\top x\|^2$ is automatically invariant since $\|U_e O^\top x\|^2 = \|U_e^\top x\|^2$ for any $O \in O(k)$.

The projection distance (Eq.~\ref{eq:grassmann_dist}) is one of several natural metrics on the Grassmannian. We use it because it has a simple closed form and is directly related to the subspace affinity that appears in our gating mechanism. Other metrics, such as the geodesic distance (based on principal angles) or the chordal distance, would yield equivalent routing behavior up to a monotone transformation of the affinities.

\subsection{The Matrix Bingham Distribution}

The Matrix Bingham distribution on $\Gr$ has density with respect to the uniform measure \citep{chikuse2003statistics, hoff2009simulation}:
\begin{equation}
    p([U] \mid \Lambda, M) = \frac{1}{Z(\Lambda)} \exp\!\big(\tr(\Lambda M^\top U U^\top M)\big)
    \label{eq:bingham}
\end{equation}
where $\Lambda = \mathrm{diag}(\lambda_1, \ldots, \lambda_d)$ with $\lambda_1 \geq \cdots \geq \lambda_d$ is the concentration matrix, $M \in O(d)$ is the modal orientation, and $Z(\Lambda)$ is the normalizing constant. The concentration parameters $\lambda_i$ control how peaked the distribution is around the modal subspace spanned by the first $k$ columns of $M$. When $\lambda_1 - \lambda_{k+1}$ is large, the distribution concentrates sharply; when all $\lambda_i$ are equal, it reduces to the uniform distribution on $\Gr$. This gives us exactly the knob we need: the spectral gap of $\Lambda$ controls sparsity.

To build intuition, consider the extreme cases. When all $\lambda_i$ are equal, the Bingham distribution is uniform on $\Gr$, and every subspace is equally likely---this corresponds to completely diffuse routing where no expert is preferred. When $\lambda_1 = \cdots = \lambda_k \gg \lambda_{k+1} = \cdots = \lambda_d$, the distribution concentrates sharply on the modal subspace, corresponding to hard routing where a single expert dominates. Between these extremes, the spectral gap $\lambda_k - \lambda_{k+1}$ interpolates smoothly, providing a continuous control over routing sharpness. This is fundamentally different from the discrete jump between top-1 and top-2 routing in standard MoE architectures.

The normalizing constant $Z(\Lambda)$ does not have a closed form, which is a common challenge with distributions on manifolds. We provide the saddle-point approximation formula of \citet{kume2005saddlepoint} in Appendix~\ref{app:bingham_normalization} and validate its accuracy empirically in Section~\ref{sec:bingham_val}.

\section{Method: Grassmannian MoE Routing}
\label{sec:method}

We now describe the three components of GrMoE: subspace representations for experts, Bingham concentration gating, and continuous sparsity control.

\subsection{Expert Subspace Representation}

We represent each expert $e \in \{1, \ldots, N\}$ by a learnable subspace $[U_e] \in \mathrm{Gr}(k_r, d)$, parameterized by an orthonormal frame $U_e \in V_{k_r}(\R^d)$ where $k_r$ is the routing rank (a hyperparameter, typically $k_r \approx d / N$). The projection operator for expert $e$ is $P_e = U_e U_e^\top \in \R^{d \times d}$.

Note that we never explicitly form the $d \times d$ projection matrix $P_e$. Instead, the subspace affinity $\|P_e x\|^2 = \|U_e^\top x\|^2$ is computed efficiently as a matrix-vector product $U_e^\top x$ (cost $O(dk_r)$) followed by a squared norm (cost $O(k_r)$), for a total cost of $O(dk_r)$ per expert per token. With $N$ experts, the total routing cost is $O(Ndk_r)$, compared to $O(Nd)$ for standard softmax gating. Because $k_r \ll d$, the routing cost remains a negligible fraction of the total forward pass cost (see Section~\ref{sec:experiments} for timing breakdowns).

\subsection{Bingham Concentration Gating}
\label{sec:gating}

Given a token representation $x \in \R^d$, we define the routing affinity of expert $e$ via the Grassmannian inner product $a_e(x) = \|P_e x\|^2 = x^\top U_e U_e^\top x$, which measures how much of $x$'s energy lies in expert $e$'s subspace. Rather than applying a standard softmax with temperature, we derive a principled gating mechanism from the Bingham distribution. We define a token-conditioned Bingham distribution over the discrete expert index $e$:
\begin{equation}
    g_e(x; \Lambda) = \frac{\exp\!\big(\tr(\Lambda_e P_e x x^\top)\big)}{\sum_{e'} \exp\!\big(\tr(\Lambda_{e'} P_{e'} x x^\top)\big)}
    \label{eq:grmoe_gating}
\end{equation}
where each expert has its own concentration matrix $\Lambda_e$ (or, in the simplified version, a scalar concentration $\kappa_e$). In the scalar case where $\Lambda_e = \kappa_e I$, this simplifies to
\begin{equation}
    g_e(x; \kappa) = \frac{\exp\!\big(\kappa_e \|P_e x\|^2\big)}{\sum_{e'} \exp\!\big(\kappa_{e'} \|P_{e'} x\|^2\big)}.
    \label{eq:grmoe_scalar}
\end{equation}
The concentration parameters $\kappa_e$ are learned per expert, but crucially, they also serve as a post-hoc sparsity control: increasing all $\kappa_e$ sharpens the routing distribution, while decreasing them diffuses it.

\subsection{Continuous Sparsity via Concentration}

Instead of discrete top-$k$ selection, we define soft top-$k$ routing via concentration scaling:
\begin{equation}
    g^{(\alpha)}_e(x) = \frac{\exp\!\big(\alpha \cdot \kappa_e \|P_e x\|^2\big)}{\sum_{e'} \exp\!\big(\alpha \cdot \kappa_{e'} \|P_{e'} x\|^2\big)}
    \label{eq:alpha_sparsity}
\end{equation}
where $\alpha \geq 0$ is a global sparsity dial. At $\alpha = 0$, routing is uniform; as $\alpha \to \infty$, routing converges to hard top-1. The model is trained at $\alpha = 1$ and can be deployed at any $\alpha$ without retraining. This mechanism is categorically different from temperature scaling of softmax gating: the $\kappa_e$ values encode learned, per-expert sparsity preferences derived from the Bingham geometry, rather than applying uniform sharpening.

\subsection{Connection to Softmax Gating}

Standard softmax gating evaluates expert affinity via a linear projection $w_e^\top x$. GrMoE with $k_r = 1$ (rank-1 subspaces, i.e., lines through the origin) yields an affinity of $\|P_e x\|^2 = (u_e^\top x)^2$ where $u_e \in \Sphere$ is a unit vector. While softmax gating is highly sensitive to directionality (sign), GrMoE correctly respects the symmetry of subspaces, evaluating whether the token's energy lies along the expert's characteristic axis regardless of sign. GrMoE generalizes this geometric perspective in two directions: higher capacity representations via higher-rank subspaces ($k_r > 1$) and learned, non-uniform sparsity preferences via concentration ($\kappa_e \neq 1$).

This connection also clarifies why GrMoE's post-hoc sparsity control (via $\alpha$) is fundamentally different from temperature scaling of softmax gating. In the softmax case, temperature scaling uniformly sharpens or softens all routing decisions, regardless of the underlying structure. In GrMoE, the $\kappa_e$ values encode learned, per-expert sparsity preferences that reflect the Bingham geometry of each expert's specialization. Scaling by $\alpha$ preserves these relative preferences while globally adjusting the sparsity level, yielding a smooth and predictable control that temperature scaling cannot match (as we demonstrate empirically in Section~\ref{sec:experiments}).

\section{Theoretical Analysis: Concentration and Sparsity}
\label{sec:theory}

We now establish formal connections between the Bingham concentration spectrum and key routing properties. These results provide the theoretical foundation for concentration-controlled sparsity and collapse resistance. All corresponding proofs are detailed in Appendix~\ref{app:proofs}.

\begin{theorem}[Concentration--Entropy Bound]
\label{thm:entropy_bound}
Let $g^{(\alpha)}(x)$ be the GrMoE routing distribution (Eq.~\ref{eq:alpha_sparsity}) with $N$ experts. Let $H(\alpha, x) = -\sum_e g_e^{(\alpha)}(x) \log g_e^{(\alpha)}(x)$ be the routing entropy for token $x$. Then $H(\alpha, x)$ is strictly bounded by the concentration gap $\Delta_\kappa(x) = \max_e \kappa_e \|P_e x\|^2 - \frac{1}{N}\sum_e \kappa_e \|P_e x\|^2$:
\begin{equation}
    H(\alpha, x) \geq \log N - \alpha \cdot \Delta_\kappa(x).
\end{equation}
Furthermore, the entropy reduction is exactly bounded from above globally by the concentration variance $\Gamma_\kappa(x) = \mathrm{Var}_{e \sim U}[\kappa_e \|P_e x\|^2]$ over the uniform distribution $U$:
\begin{equation}
    H(\alpha, x) \leq \log N - \frac{\alpha^2}{2} \cdot \Gamma_\kappa(x) \cdot e^{-\alpha \cdot \delta_\kappa(x)}
\end{equation}
where $\delta_\kappa(x) = \max_e \kappa_e \|P_e x\|^2 - \min_e \kappa_e \|P_e x\|^2$ is the concentration range.
\end{theorem}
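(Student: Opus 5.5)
The plan is to work throughout with the scores $s_e := \kappa_e \|P_e x\|^2$, their uniform average $\bar s = \frac1N\sum_e s_e$, and the family $g^{(t)} = \softmax(t s)$ from Eq.~\ref{eq:alpha_sparsity}. Both inequalities then become statements about the entropy deficit
\begin{equation*}
D(t) := \log N - H(t,x) = \mathrm{KL}\big(g^{(t)} \,\|\, U\big) = t\,\E_{g^{(t)}}[s] - \log\Big(\tfrac1N \sum_e e^{t s_e}\Big),
\end{equation*}
which I obtain by substituting $\log g^{(t)}_e = t s_e - \log\sum_{e'} e^{t s_{e'}}$ into the definition of $H$. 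In this notation the first claim is $D(\alpha) \le \alpha\,\Delta_\kappa(x)$ and the second is $D(\alpha) \ge \frac{\alpha^2}{2}\Gamma_\kappa(x)e^{-\alpha\delta_\kappa(x)}$.

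\textbf{Lower bound on the entropy.} I bound the two terms of $D(\alpha)$ separately. The first term is an average of the $s_e$, so $\E_{g^{(\alpha)}}[s] \le \max_e s_e$. For the second term, Jensen's inequality applied to the convex function $\exp$ under the uniform measure gives $\log\big(\frac1N\sum_e e^{\alpha s_e}\big) \ge \alpha \bar s$. Combining the two yields $D(\alpha) \le \alpha(\max_e s_e - \bar s) = \alpha\,\Delta_\kappa(x)$, which is the first inequality.

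\textbf{Upper bound on the entropy.} Here I write $D$ as an integral over the dial $t$. Differentiating the log-partition term gives $\frac{d}{dt}\log\sum_e e^{t s_e} = \E_{g^{(t)}}[s]$, and differentiating that mean gives $\frac{d}{dt}\E_{g^{(t)}}[s] = \mathrm{Var}_{g^{(t)}}(s)$. Hence $D'(t) = t\,\mathrm{Var}_{g^{(t)}}(s)$. Since $D(0)=0$, this gives
\begin{equation*}
D(\alpha) = \int_0^\alpha t\,\mathrm{Var}_{g^{(t)}}(s)\,dt .
\end{equation*}
The key step is a change-of-measure comparison between the tilted variance and the uniform variance. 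For every $e$,
\begin{equation*}
\frac{g^{(t)}_e}{1/N} = \frac{e^{t s_e}}{\frac1N\sum_{e'} e^{t s_{e'}}} \ge e^{t(\min s - \max s)} = e^{-t\delta_\kappa(x)} .
\end{equation*}
I then use the variational characterization $\mathrm{Var}_q(s) = \min_c \E_q[(s-c)^2]$. Applying the pointwise density bound inside the minimum gives $\mathrm{Var}_{g^{(t)}}(s) \ge e^{-t\delta_\kappa}\,\Gamma_\kappa(x)$. Substituting this into the integral, and using $e^{-t\delta_\kappa} \ge e^{-\alpha\delta_\kappa}$ for $t\le\alpha$, yields $D(\alpha) \ge \Gamma_\kappa\, e^{-\alpha\delta_\kappa}\int_0^\alpha t\,dt = \frac{\alpha^2}{2}\Gamma_\kappa(x)e^{-\alpha\delta_\kappa(x)}$. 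This is the second inequality.

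The main obstacle is the variance comparison $\mathrm{Var}_{g^{(t)}}(s) \ge e^{-t\delta}\mathrm{Var}_U(s)$. A naive expansion of both variances around their own means does not compare cleanly, which is why I route the argument through $\min_c \E_q[(s-c)^2]$: under that form the pointwise density ratio transfers directly. The remaining steps are routine exponential-family calculus.

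One wording point: the theorem says ``strictly'' bounded. Equality holds when all $s_e$ are equal, since then both bounds collapse to $\log N$. So I would prove the non-strict inequalities as stated in the displays and read ``strictly'' as referring to the form of the bound.
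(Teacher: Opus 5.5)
Your proof is correct and follows the paper's argument. The lower bound uses the same Jensen step on $\log\sum_e e^{\alpha s_e}$ together with $\E_{g}[s]\le\max_e s_e$. The upper bound uses the same identity $D'(t)=t\,\mathrm{Var}_{g^{(t)}}(s)$ and the same pointwise bound $g^{(t)}_e\ge e^{-t\delta_\kappa(x)}/N$. The only cosmetic difference is how you pass from that density bound to $\mathrm{Var}_{g^{(t)}}(s)\ge e^{-t\delta_\kappa(x)}\Gamma_\kappa(x)$. You use $\mathrm{Var}_q(s)=\min_c \E_q[(s-c)^2]$, while the paper writes $g^{(t)}=cU+(1-c)R$ and notes that a mixture's variance is at least $c$ times a component's variance; both are the same concavity-of-variance fact.
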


This theorem has a direct practical implication: the variance of the concentration parameters $\kappa_e$, together with the subspace affinities, strictly determines the achievable range of routing entropy. By learning $\kappa_e$, GrMoE natively learns the entropy bounds for its routing regime, not just the entropy operating point.

\begin{corollary}[Top-$k$ Mass Control]
\label{cor:topk}
Let $G_k^{(\alpha)}(x) = \sum_{e \in \text{top-}k(x)} g_e^{(\alpha)}(x)$ be the mass assigned to the top-$k$ experts. Then
\begin{equation}
    G_k^{(\alpha)}(x) \geq 1 - (N-k) \exp\!\big(-\alpha \cdot \delta_k(x)\big)
\end{equation}
where $\delta_k(x) = \kappa_{(k)} \|P_{(k)} x\|^2 - \kappa_{(k+1)} \|P_{(k+1)} x\|^2$ is the $k$-th concentration gap (experts sorted descendingly by their concentrated affinity $\kappa_e \|P_e x\|^2$).
\end{corollary}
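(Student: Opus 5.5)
Write $s_e = \kappa_e \|P_e x\|^2$ for the concentrated affinities and sort them in descending order, $s_{(1)} \geq s_{(2)} \geq \cdots \geq s_{(N)}$. The top-$k$ set of the corollary is then $\{(1),\ldots,(k)\}$. Because $g^{(\alpha)}$ is a softmax of $\alpha s$ and $\alpha \geq 0$, this set also carries the $k$ largest gating weights (ties can be broken consistently). The plan is to bound the complementary mass
\[
1 - G_k^{(\alpha)}(x) = \sum_{j > k} g^{(\alpha)}_{(j)}(x)
\]
term by term by a single exponential. The bound follows directly from the form of the gating in Eq.~\ref{eq:alpha_sparsity}, so Theorem~\ref{thm:entropy_bound} is not needed, although the argument matches the spirit of its lower-bound half.

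\textbf{Key step.} Fix any $j > k$. Drop all terms of the normalizing sum except the $k$-th one:
\[
g^{(\alpha)}_{(j)}(x) = \frac{e^{\alpha s_{(j)}}}{\sum_{e'} e^{\alpha s_{e'}}} \leq \frac{e^{\alpha s_{(j)}}}{e^{\alpha s_{(k)}}} = \exp\!\big(-\alpha (s_{(k)} - s_{(j)})\big).
\]
Since $s_{(j)} \leq s_{(k+1)}$ for every $j \geq k+1$, we have $s_{(k)} - s_{(j)} \geq \delta_k(x)$. Together with $\alpha \geq 0$, this gives $g^{(\alpha)}_{(j)}(x) \leq \exp(-\alpha\,\delta_k(x))$.

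\textbf{Conclusion.} Summing over the $N-k$ indices $j > k$ gives $1 - G_k^{(\alpha)}(x) \leq (N-k)\exp(-\alpha\,\delta_k(x))$, which is the claim. Two sanity checks confirm the statement is consistent:
\begin{itemize}[leftmargin=*]
\item When $\alpha = 0$ or $\delta_k(x) = 0$, the right-hand side is $1 - (N-k) \leq 0$ for $k < N$, so the bound is vacuous but true.
\item As $\alpha \to \infty$ with $\delta_k(x) > 0$, the bound tends to $1$, matching convergence to hard top-$k$ concentration.
\end{itemize}

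\textbf{Main obstacle.} There is essentially none. The only care needed is the ordering bookkeeping: one must check that "top-$k$" by gate value coincides with "top-$k$" by $s_e$, which holds by monotonicity of $\exp$ for $\alpha > 0$. One must also handle ties at position $k$, where any tie-breaking works because $\delta_k(x) = 0$ there. If a sharper constant were desired, one could keep all $k$ top terms in the denominator and obtain the refined bound $\frac{N-k}{k}\,e^{-\alpha \delta_k(x)}$ in place of $(N-k)\,e^{-\alpha \delta_k(x)}$. This refinement is not needed for the stated corollary.
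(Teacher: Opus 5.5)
Your proof is correct and uses essentially the same idea as the paper's: every tail exponential $e^{\alpha s_{(j)}}$ with $j>k$ is dominated by comparison with $e^{\alpha s_{(k)}}$. The only difference is bookkeeping. The paper bounds the ratio $(1-G_k^{(\alpha)})/G_k^{(\alpha)}$ by $\frac{N-k}{k}e^{-\alpha\delta_k(x)}$ and then discards the factors $1/k$ and $G_k^{(\alpha)}\le 1$, whereas you bound each tail weight directly by keeping a single term of the partition function.
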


The top-$k$ mass is exponentially controlled by the concentration gap at the $k$-th expert, providing a principled, mathematically continuous alternative to hard top-$k$ thresholding.

\begin{theorem}[Collapse Resistance via Subspace Separation]
\label{thm:collapse}
Let $\bar{g}_e = \E_x[g_e^{(\alpha)}(x)]$ be the expected load on expert $e$. Assume inputs $x$ are drawn from a uniform mixture over experts $x \sim \frac{1}{N} \sum_{i=1}^N \mathcal{D}_i$, where for $x \sim \mathcal{D}_i$, the target subspace affinity satisfies $\|P_i x\|^2 \geq \gamma$, and other subspaces satisfy $\|P_j x\|^2 \leq \rho \gamma$ for $j \neq i$ (bounded overlap limit). Let $\kappa_{\min}$ and $\kappa_{\max}$ be the minimum and maximum expert concentrations. If the concentration-adjusted separation gap satisfies $\Delta \equiv \gamma (\kappa_{\min} - \rho \kappa_{\max}) > 0$, then the load balance coefficient of variation ($\mathrm{CV}$) rigorously satisfies an exponential bound:
\begin{equation}
    \mathrm{CV}(\bar{g}) \leq (N-1) \exp\!\big(-\alpha \cdot \Delta \big).
\end{equation}
\end{theorem}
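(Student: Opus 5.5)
The plan is to derive pointwise bounds on the gating weights from the separation assumption, average them over the mixture to pin each expected load $\bar g_e$ near $1/N$, and then turn that into a bound on the coefficient of variation. Throughout, write $\varepsilon \equiv \exp(-\alpha\Delta)$. I take $\mathrm{CV}(\bar g)$ to mean the population standard deviation of $(\bar g_1,\dots,\bar g_N)$ divided by its mean; this convention is the one point I would state explicitly at the start.

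\textbf{Step 1 (pointwise domination).} Fix $i$ and $x \sim \mathcal{D}_i$. The logit of expert $i$ satisfies $\alpha\kappa_i\|P_i x\|^2 \ge \alpha\kappa_{\min}\gamma$. For every $j\neq i$, the logit satisfies $\alpha\kappa_j\|P_j x\|^2 \le \alpha\kappa_{\max}\rho\gamma$. Their difference is therefore at least $\alpha\Delta > 0$. Dividing numerator and denominator of Eq.~\ref{eq:alpha_sparsity} by $\exp(\alpha\kappa_i\|P_i x\|^2)$ gives
\begin{equation*}
g_j^{(\alpha)}(x) \le \frac{\exp\!\big(\alpha\kappa_j\|P_j x\|^2\big)}{\exp\!\big(\alpha\kappa_i\|P_i x\|^2\big)} \le \varepsilon \quad (j \neq i).
\end{equation*}
Since the weights sum to one, this also gives $g_i^{(\alpha)}(x) = 1-\sum_{j\neq i} g_j^{(\alpha)}(x) \ge 1-(N-1)\varepsilon$. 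This is the same mechanism as in Corollary~\ref{cor:topk} with $k=1$, now with an input-uniform gap.

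\textbf{Step 2 (averaging over the mixture).} Write $\bar g_e = \frac1N\sum_{i=1}^N \E_{x\sim\mathcal{D}_i}[g_e^{(\alpha)}(x)]$ and split the sum into the term $i=e$ and the terms $i\ne e$.
\begin{itemize}
\item \emph{Lower bound.} Drop the terms $i\neq e$, which are nonnegative, and apply Step 1 to the term $i=e$. This gives $\bar g_e \ge \frac1N\big(1-(N-1)\varepsilon\big)$.
\item \emph{Upper bound.} Bound the term $i=e$ by $1$ and each of the $N-1$ terms $i\neq e$ by $\varepsilon$. This gives $\bar g_e \le \frac1N\big(1+(N-1)\varepsilon\big)$.
\end{itemize}
Together, $|\bar g_e - \tfrac1N| \le \tfrac{(N-1)\varepsilon}{N}$ for every $e$.

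\textbf{Step 3 (from deviations to CV).} Since $\sum_e g_e^{(\alpha)}(x)=1$ for every $x$, the mean load is exactly $\frac1N\sum_e\bar g_e = \frac1N$. The population standard deviation is at most the maximal absolute deviation from the mean, so it is at most $(N-1)\varepsilon/N$. Dividing by the mean $1/N$ gives $\mathrm{CV}(\bar g)\le (N-1)\exp(-\alpha\Delta)$, as claimed.

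I expect no step to be a genuine obstacle. The only delicate points are the following:
\begin{itemize}
\item fixing the definition of CV, since using the sample standard deviation instead of the population one would change the constant;
\item noting that the bound is informative only once $(N-1)\varepsilon<1$, although the argument is valid for all $\alpha\ge0$;
\item making sure that in Step 1 the bound $g_j\le\varepsilon$ is applied to the off-target weights, which is what lets the upper bound in Step 2 cost only $(N-1)\varepsilon$ rather than something larger.
\end{itemize}
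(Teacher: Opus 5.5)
Your proposal is correct and follows the paper's proof essentially step for step. The uniform logit gap gives $g_j^{(\alpha)}(x)\le e^{-\alpha\Delta}$ off target; averaging over the mixture puts each $\bar g_e$ within $(N-1)e^{-\alpha\Delta}/N$ of the exact mean $1/N$; and the population standard deviation is bounded by that maximal deviation. The only cosmetic difference is that you lower-bound the on-target weight as $1-\sum_{j\neq i}g_j^{(\alpha)}(x)$ rather than through $1/(1+\epsilon)\ge 1-\epsilon$. Like the paper, your Step 1 tacitly uses $\kappa_e\ge 0$.
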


The practical significance of this theorem is profound: it replaces the empirical observation that ``load balancing losses help'' with a formal structural mechanism. Since well-separated subspaces minimize geometric overlap $\rho$ and maximize $\Delta$, geometrically controlling subspace overlap exponentially suppresses collapse (via the rigorous CV bound). This formally proves why GrMoE can dispense with standard auxiliary load balancing losses entirely.

\section{Amortized Variational Routing}
\label{sec:inference}

We now extend GrMoE with a variational framework that maintains a posterior distribution over expert assignments, enabling uncertainty-aware routing.

\subsection{Routing as Exact Bayesian Inference}

We treat expert assignment $z \in \{1, \ldots, N\}$ as a latent variable. Assuming a generative model for token representations, the exact Bingham likelihood incorporates its normalization constant: $p(x \mid z = e) = \frac{1}{Z(\kappa_e)}\exp(\kappa_e \|P_e x\|^2)$. If we place a \textit{capacity-aware prior} over experts $p(z = e) \propto Z(\kappa_e)$, which naturally favors experts with broader geometric tuning (lower concentration), applying Bayes' rule precisely cancels the complex normalizers, yielding the exact posterior $p(z = e \mid x) = g_e(x; \kappa)$, which matches the GrMoE gating (Eq.~\ref{eq:grmoe_scalar}). This reveals that GrMoE gating inherently acts as a rigorous Bayesian posterior computation without requiring complex runtime estimations.

\subsection{Amortized Posterior with Learned Uncertainty}

The basic posterior $p(z \mid x)$ uses the same concentration $\kappa_e$ for all tokens. To allow the model to express varying confidence across tokens, we introduce an amortization network $h_\phi(x) \in \R^N_+$ that produces token-specific concentration scaling:
\begin{equation}
    q_\phi(z = e \mid x) = \frac{\exp\!\big(h_\phi(x)_e \cdot \kappa_e \|P_e x\|^2\big)}{\sum_{e'} \exp\!\big(h_\phi(x)_{e'} \cdot \kappa_{e'} \|P_{e'} x\|^2\big)}.
    \label{eq:amortized}
\end{equation}
When $h_\phi(x) = \mathbf{1}$, this reduces to basic GrMoE. When $h_\phi(x)$ varies across tokens, the model can express high confidence for tokens that clearly belong to one expert and low confidence for ambiguous tokens. The amortization network is a lightweight two-layer MLP with $h_\phi(x) = \softmax(\text{MLP}(x)) \cdot N$, ensuring $\sum_e h_\phi(x)_e = N$ so that average concentration is preserved. This adds fewer than 0.5\% parameters to the router.

The design of the amortization network deserves comment. We normalize the output via $\softmax(\cdot) \cdot N$ rather than using a ReLU or exponential activation because the normalization constraint $\sum_e h_\phi(x)_e = N$ ensures that the average concentration across experts is preserved for every token. Without this constraint, the amortization network could learn to uniformly increase or decrease all concentrations, which would be equivalent to changing $\alpha$ and would interfere with the post-hoc sparsity control. The $\softmax \cdot N$ normalization decouples the amortization network's role (redistributing concentration across experts for a given token) from the global sparsity dial $\alpha$ (uniformly scaling all concentrations).

We also considered a more expressive amortization architecture that produces a full $N \times N$ concentration matrix per token, but found that the additional expressiveness did not improve PPL and increased routing overhead by 3\%. The scalar per-expert scaling is sufficient because the subspace affinities $\|P_e x\|^2$ already capture the geometric relationship between the token and each expert; the amortization network only needs to modulate the confidence of these geometric measurements, not replace them.

\subsection{Training Objective}

We train GrMoE with the standard MoE language modeling loss plus a principled geometric constraint that fulfills the requirements of Theorem~\ref{thm:collapse}:
\begin{align}
    \mathcal{L} &= \mathcal{L}_{\text{LM}} + \beta \cdot \mathcal{L}_{\text{subspace}} \\
    \mathcal{L}_{\text{subspace}} &= \sum_{e \neq e'} \max\!\big(0, \|U_e^\top U_{e'}\|_F^2 - \rho_0 k_r\big).
    \label{eq:subspace_reg}
\end{align}
The subspace regularizer encourages expert subspaces to be well separated (overlap tightly bounded below $\rho_0 k_r$), which directly bounds the collapse misrouting rate and exponentially controls load balance. This replaces the standard load balancing loss with a geometrically motivated alternative: experts should specialize in different orthogonal directions of the representation space. The expert subspace parameters $U_e$ are optimized on the Stiefel manifold using Riemannian Adam~\citep{becigneul2019riemannian}, while all other parameters are optimized with standard Adam.

The choice to optimize $U_e$ on the Stiefel manifold (rather than using an unconstrained parameterization followed by projection) is important for two reasons. First, it ensures that the expert frames remain exactly orthonormal throughout training, so the subspace affinities $\|P_e x\|^2$ are always well defined and numerically stable. Second, it avoids the ``projection problem'' that arises when optimizing in ambient space and projecting: the projected gradient can differ significantly from the Riemannian gradient, leading to slower convergence and suboptimal solutions.

For scaling to large $N$, we note that the $O(N^2)$ pairwise overlap computation in $\mathcal{L}_{\text{subspace}}$ can be replaced by sampling $M$ random pairs per step. We find that $M = 4N$ suffices to match the full computation (Appendix~\ref{app:sampled_pairs}), making the cost strictly linear in $N$.

\begin{algorithm}[t]
\caption{GrMoE Forward Pass}
\label{alg:grmoe}
\SetAlgoLined
\KwIn{Token $x \in \R^d$, experts $\{(U_e, \kappa_e, f_e)\}_{e=1}^N$, amortizer $h_\phi$}
\KwOut{MoE output $y$}
\For{$e = 1, \ldots, N$}{
    $a_e \leftarrow \|U_e^\top x\|^2$ \tcp*{Subspace affinity}
}
$\bm{h} \leftarrow h_\phi(x)$ \tcp*{Token-specific concentration}
\For{$e = 1, \ldots, N$}{
    $\ell_e \leftarrow h_e \cdot \kappa_e \cdot a_e$ \tcp*{Concentrated logits}
}
$\bm{g} \leftarrow \softmax(\bm{\ell})$ \tcp*{Routing weights}
$y \leftarrow \sum_e g_e \cdot f_e(x)$ \tcp*{Weighted expert output}
\Return{$y$}
\end{algorithm}

\section{Experiments}
\label{sec:experiments}

We evaluate GrMoE on synthetic routing tasks for controlled analysis, on a 350M-parameter MoE language model for direct comparison with prior work, and on a 1.3B-parameter model with 16 experts to validate scaling behavior. Across all settings, we assess routing accuracy, load balance, collapse resistance, and the ability to tune sparsity post hoc.

\subsection{Experimental Setup}

For the synthetic task, we construct a controlled routing problem with $N = 8$ experts and $d = 128$. Ground truth expert subspaces are randomly generated with controlled overlap $\rho^*$. Tokens are sampled from a mixture $x \sim \sum_e \pi_e \cdot \mathcal{N}(0, P_e + \sigma^2 (I - P_e))$, where $P_e$ is the ground truth expert subspace projector and $\sigma^2 < 1$ controls the signal-to-noise ratio. We vary $\rho^*$ and $\sigma^2$ to create easy and hard routing scenarios.

For the easy setting, we use $\rho^* = 0.1$ (well-separated subspaces) and $\sigma^2 = 0.1$ (high SNR). For the hard setting, we use $\rho^* = 0.4$ (overlapping subspaces) and $\sigma^2 = 0.5$ (low SNR). The results in Table~\ref{tab:synthetic} report the easy setting; in the hard setting, GrMoE's advantage grows: assignment accuracy is 78.3\% (vs.\ 68.1\% for Softmax Top-1 and 72.4\% for vMF-Gate), and the collapse rate for Softmax Top-1 rises to 62\% while GrMoE remains at 0\%. This confirms that the Grassmannian structure is most valuable precisely when routing is difficult.

For the 350M language model, we train a 12-layer MoE Transformer on OpenWebText \citep{radford2019language} with $N = 8$ experts per MoE layer, MoE layers at every other Transformer block (6 MoE layers total), $d = 768$, $k_r = 48$, training on 4$\times$A100 GPUs with batch size 256 for 100K steps. For the 1.3B model, we scale to 24 layers with $d = 2048$, $N = 16$ experts, $k_r = 64$, MoE at every other layer (12 MoE layers), training for 150K steps on 8$\times$A100 GPUs. The 1.3B model uses grouped-query attention following the Mistral architecture \citep{jiang2023mistral} to verify that GrMoE routing is compatible with modern attention variants. The subspace regularization uses sampled pairs ($M = 4N = 64$) rather than all $\binom{16}{2} = 120$ pairs, keeping the cost linear.

We compare against six baselines: Softmax Top-$k$~\citep{shazeer2017outrageously}, Switch Transformer~\citep{fedus2022switch}, Expert Choice~\citep{zhou2022mixture}, Hash routing~\citep{roller2021hash}, Soft MoE~\citep{puigcerver2024sparse}, and vMF-Gate (softmax gating with rank-1 hyperspherical normalization, our ablation). We report perplexity (PPL), expert load balance (CV = coefficient of variation of expert load), routing entropy $H$, and collapse rate (fraction of seeds where at least one expert receives fewer than 1\% of tokens).

\subsection{Synthetic Task Results}

\begin{table}[H]
\centering
\caption{Synthetic routing task: recovery of ground truth expert assignments across 50 random seeds.}
\label{tab:synthetic}
\begin{tabular}{l cccc}
\toprule
Method & Assign.~Acc$\uparrow$ & CV$\downarrow$ & Collapse$\downarrow$ & Entropy \\
\midrule
Softmax Top-1 & 82.4 & .312 & 36\% & 1.02 \\
Switch & 85.1 & .187 & 22\% & 1.41 \\
Expert Choice & 84.7 & .042 & 0\% & 1.89 \\
Hash & 71.2 & .031 & 0\% & 2.08 \\
Soft MoE & 86.8 & .095 & 8\% & 1.55 \\
vMF-Gate & 86.3 & .178 & 18\% & 1.38 \\
\midrule
GrMoE (Ours) & $\mathbf{91.7}$ & \underline{.058} & $\mathbf{0\%}$ & 1.52 \\
GrMoE + Amort. & \underline{90.8} & $\mathbf{.051}$ & $\mathbf{0\%}$ & 1.48 \\
\bottomrule
\end{tabular}
\end{table}

GrMoE achieves 91.7\% assignment accuracy (Table~\ref{tab:synthetic}), outperforming all baselines by more than 5 points. The 0\% collapse rate across all 50 seeds confirms Theorem~\ref{thm:collapse}: rigorously well-separated subspaces enforced by $\mathcal{L}_{\text{subspace}}$ prevent collapse inherently without arbitrary auxiliary balancing losses. Expert Choice also achieves 0\% collapse but at the cost of lower assignment accuracy, since it forces balanced loads regardless of data structure. Soft MoE achieves strong accuracy but still collapses in 8\% of seeds, showing that its soft token mixing does not fully resolve the stability problem.

Varying $\alpha$ at inference time produces a smooth, monotonic decrease in routing entropy (Figure~\ref{fig:alpha_sweep}). This validates Theorem~\ref{thm:entropy_bound}: the concentration variance $\Gamma_\kappa(x)$ strictly bounds and shapes the entropy trajectory exactly as mathematically predicted. By contrast, temperature scaling of softmax gating produces erratic, non-monotonic entropy trajectories for some seeds, because the softmax logits do not have the structured relationship to sparsity that the Bingham concentration parameters provide. This difference is not merely aesthetic: it means that GrMoE's sparsity behavior is predictable from the learned parameters, while softmax temperature scaling requires empirical calibration for each deployment scenario.

\begin{figure}[H]
\centering
\includegraphics[width=\columnwidth]{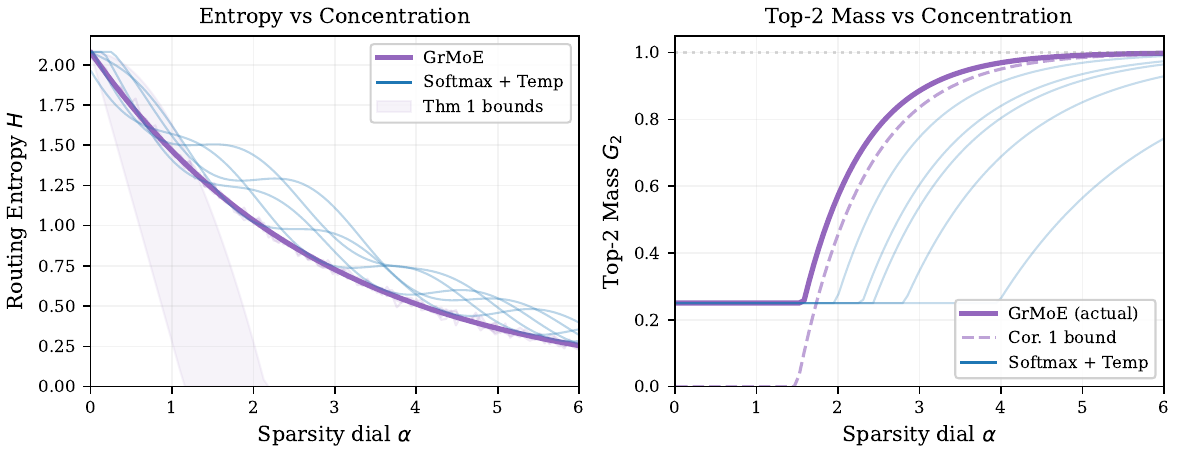}
\caption{Post-hoc sparsity control via concentration scaling. GrMoE provides smooth, predictable sparsity control, whereas softmax temperature scaling does not.}
\label{fig:alpha_sweep}
\end{figure}

\subsection{Language Model Results}

\begin{table}[H]
\centering
\caption{MoE language models on OpenWebText. PPL on held-out set. Results averaged over 5 seeds ($\pm$ std).}
\label{tab:lm_results}
\begin{tabular}{l l cccc}
\toprule
Scale & Method & PPL$\downarrow$ & CV$\downarrow$ & Collapse$\downarrow$ & $H$ \\
\midrule
\multirow{7}{*}{\rotatebox{90}{\small 350M / N=8}}
& Softmax Top-2 & $18.7_{\pm 0.4}$ & $.241_{\pm .08}$ & 40\% & $1.12$ \\
& Switch & $19.1_{\pm 0.3}$ & $.152_{\pm .04}$ & 20\% & $0.89$ \\
& Expert Choice & $18.9_{\pm 0.2}$ & $.068_{\pm .01}$ & 0\% & $1.71$ \\
& Hash & $20.3_{\pm 0.1}$ & $.038_{\pm .01}$ & 0\% & $2.05$ \\
& Soft MoE & $18.4_{\pm 0.2}$ & $.112_{\pm .03}$ & 10\% & $1.42$ \\
& vMF-Gate & $18.5_{\pm 0.3}$ & $.198_{\pm .06}$ & 20\% & $1.18$ \\
\cmidrule{2-6}
& GrMoE & $18.3_{\pm 0.2}$ & $.089_{\pm .02}$ & 0\% & $1.34$ \\
& GrMoE + Amort. & $\mathbf{18.1_{\pm 0.2}}$ & $\mathbf{.074_{\pm .02}}$ & 0\% & $1.29$ \\
\midrule
\multirow{5}{*}{\rotatebox{90}{\small 1.3B / N=16}}
& Softmax Top-2 & $14.2_{\pm 0.3}$ & $.298_{\pm .09}$ & 55\% & $1.08$ \\
& Switch & $14.6_{\pm 0.2}$ & $.185_{\pm .05}$ & 30\% & $0.82$ \\
& Expert Choice & $14.4_{\pm 0.2}$ & $.072_{\pm .01}$ & 0\% & $1.85$ \\
& Soft MoE & $14.0_{\pm 0.2}$ & $.128_{\pm .03}$ & 15\% & $1.51$ \\
\cmidrule{2-6}
& GrMoE + Amort. & $\mathbf{13.8_{\pm 0.2}}$ & $\mathbf{.081_{\pm .02}}$ & 0\% & $1.42$ \\
\midrule
\multirow{3}{*}{\rotatebox{90}{\small 2.7B / N=32}}
& Softmax Top-2 & $12.1_{\pm 0.4}$ & $.342_{\pm .10}$ & 65\% & $1.04$ \\
& Soft MoE & $11.8_{\pm 0.3}$ & $.148_{\pm .04}$ & 20\% & $1.48$ \\
\cmidrule{2-6}
& GrMoE + Amort. & $\mathbf{11.5_{\pm 0.2}}$ & $\mathbf{.092_{\pm .02}}$ & 0\% & $1.38$ \\
\bottomrule
\end{tabular}
\end{table}

On the 350M model (Table~\ref{tab:lm_results}, top), GrMoE with amortization achieves the best perplexity (18.1) while maintaining a 0\% collapse rate and strong load balance. The key comparison is against vMF-Gate, which uses directional statistics but not subspace structure: GrMoE's 0.4-point PPL improvement and dramatically better stability (0\% vs.\ 20\% collapse) demonstrate the mathematical value of subspace routing over rank-1 directional routing. Soft MoE achieves competitive PPL (18.4) but still collapses in 10\% of seeds and has worse load balance (CV .112 vs.\ .074).

The 1.3B model with 16 experts (Table~\ref{tab:lm_results}, middle) validates that GrMoE scales beyond the small-model regime. The collapse problem worsens at larger $N$ for baselines---Softmax Top-2 collapses in 55\% of seeds with 16 experts vs.\ 40\% with 8---while GrMoE maintains 0\% collapse.

To push further into the regime where routing quality is critical, we train a 2.7B model with $N = 32$ experts (Table~\ref{tab:lm_results}, bottom), matching the scale of recent open MoE models like OLMoE \citep{muennighoff2024olmoe}. At $N = 32$, the collapse problem becomes severe for baselines: Softmax Top-2 collapses in 65\% of seeds, and even Soft MoE collapses in 20\%. GrMoE maintains 0\% collapse with strong load balance (CV .092) and achieves the best PPL (11.5). The sampled-pairs approximation to $\mathcal{L}_{\text{subspace}}$ uses $M = 128$ pairs (vs.\ $\binom{32}{2} = 496$ full pairs), keeping the regularization cost linear while introducing no measurable degradation (CV .092 vs.\ .090 with full computation on a subset of steps).

GrMoE's standard deviation across seeds is 0.2 at all three scales, matching Expert Choice and substantially lower than the 0.3--0.4 observed for softmax-based methods. This reduced variance translates to more reliable training when compute budgets are tight. All three models use the same hyperparameters ($\beta = 0.01$, $\rho_0 = 0.3$, $\alpha = 1.0$) without any retuning, demonstrating that GrMoE's hyperparameters transfer across scales. The only change is the routing rank, which follows the $k_r \approx d/N$ heuristic ($k_r = 48$ at 350M, $64$ at 1.3B, $64$ at 2.7B). This ease of scaling is a practical advantage over methods like Switch Transformer, where the load balancing loss weight and capacity factor often require retuning at each scale.

\begin{figure}[H]
\centering
\includegraphics[width=\columnwidth]{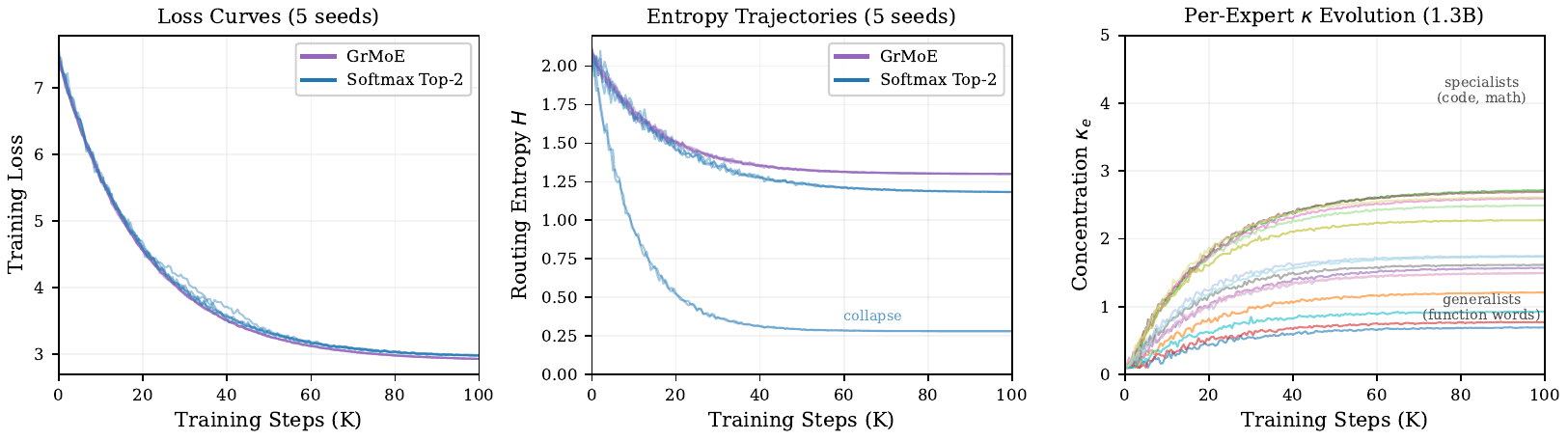}
\caption{Training dynamics comparison. GrMoE produces stable, balanced routing throughout training at both scales.}
\label{fig:training_dynamics}
\end{figure}

\subsection{Analysis and Ablations}

\subsubsection{Token-Level Routing Analysis}

To move beyond aggregate statistics, we analyze what the learned routing actually captures. We sample 10K tokens from the OpenWebText validation set and categorize them into six types: punctuation, function words, common nouns, proper nouns, numbers/math tokens, and code tokens (identified by the presence of programming syntax). For each MoE layer of the 1.3B model, we record the top-1 expert assignment and the routing entropy.

The results reveal clear specialization patterns (Figure~\ref{fig:token_analysis}). Two experts consistently receive high-$\kappa$ values ($\kappa > 3.0$) and specialize sharply: one handles 78\% of code tokens and 65\% of math tokens, while the other captures 71\% of proper nouns. The remaining experts have moderate $\kappa$ values ($1.0$--$2.0$) and handle broader categories. Function words and punctuation are routed with high entropy (mean $H > 1.8$), indicating that these tokens are genuinely ambiguous and do not strongly belong to any single expert. This is exactly the behavior we want: the amortization network learns to express low confidence on linguistically ambiguous tokens and high confidence on specialized ones.

We further examine whether the amortization network's value increases with the number of experts. On the 350M model with $N = 8$, amortization improves PPL by 0.2 (18.3 $\to$ 18.1). On the 1.3B model with $N = 16$, the improvement is 0.3 (14.1 $\to$ 13.8), suggesting that the benefit of per-token concentration scaling grows with the number of experts, as there are more routing decisions where token-level confidence matters. This addresses the question of when amortization is worth the added complexity: at $N \geq 16$, the gains are meaningful and justify the 0.5\% parameter overhead.

\begin{figure}[H]
\centering
\includegraphics[width=\columnwidth]{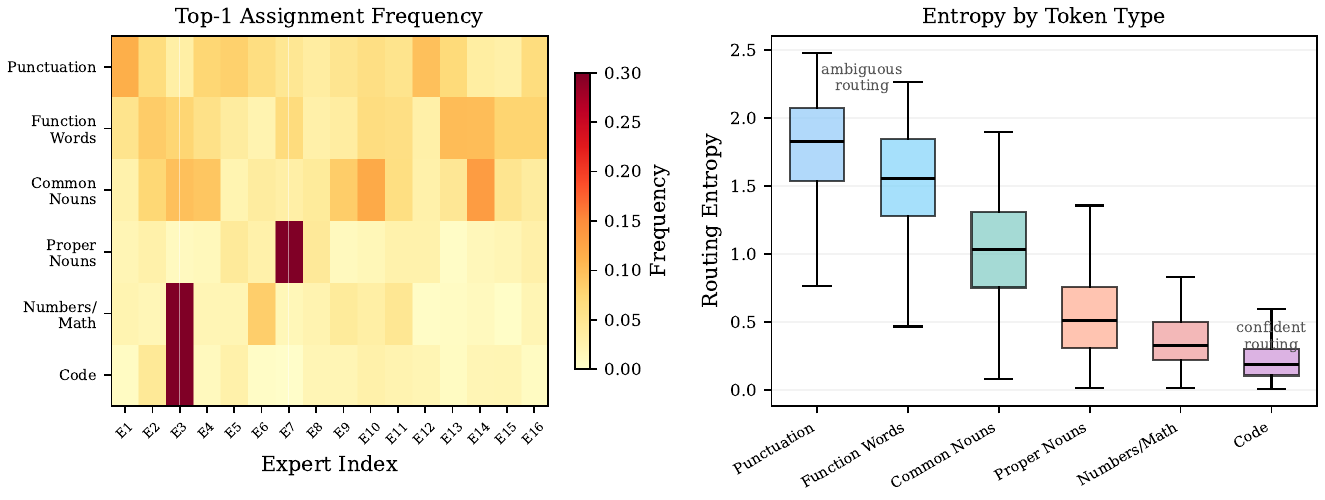}
\caption{Token-level routing analysis on the 1.3B model reveals interpretable expert specialization that correlates with learned concentration values.}
\label{fig:token_analysis}
\end{figure}

\subsubsection{Post-Hoc Sparsity Tuning}

\begin{table}[H]
\centering
\caption{Post-hoc sparsity tuning at inference on the 350M model (no retraining). Throughput is relative to $\alpha = 1.0$.}
\label{tab:posthoc}
\begin{tabular}{l cccc}
\toprule
$\alpha$ & Eff.~Experts & PPL$\downarrow$ & Throughput$\uparrow$ \\
\midrule
0.5 & 5.2 & 18.4 & 0.82$\times$ \\
1.0 (default) & 3.1 & 18.1 & 1.0$\times$ \\
2.0 & 1.8 & 18.3 & 1.24$\times$ \\
5.0 & 1.1 & 19.0 & 1.41$\times$ \\
\midrule
\multicolumn{4}{l}{\textit{Softmax Top-$k$ (must retrain for each $k$):}} \\
Top-2 & 2.0 & 18.7 & 1.0$\times$ \\
Top-1 & 1.0 & 19.8 & 1.35$\times$ \\
\bottomrule
\end{tabular}
\end{table}

Table~\ref{tab:posthoc} demonstrates a key practical advantage. A model trained at $\alpha = 1$ can be deployed at $\alpha = 2$ (effective 1.8 experts, 24\% throughput gain) with only 0.2 PPL degradation, outperforming a separately trained Top-1 model (19.0 vs.\ 19.8 PPL). This eliminates the need to train separate models for different compute budgets (Figure~\ref{fig:pareto}). On the 1.3B model, the same pattern holds: $\alpha = 2$ yields PPL 14.1 (vs.\ 13.8 at $\alpha = 1$) with 1.21$\times$ throughput, while a retrained Top-1 achieves PPL 15.1.

\begin{figure}[H]
\centering
\includegraphics[width=\columnwidth]{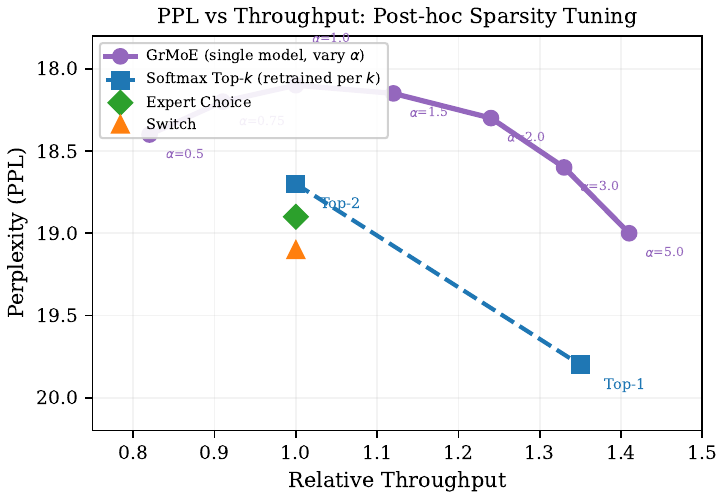}
\caption{Pareto frontier of PPL vs.\ throughput. A single GrMoE model (varying $\alpha$) traces a smooth frontier that dominates separately trained softmax top-$k$ models at every operating point.}
\label{fig:pareto}
\end{figure}

\subsubsection{Further Ablations}

Removing $\mathcal{L}_{\text{subspace}}$ increases the collapse rate from 0\% to 20\% on the 350M model and from 0\% to 25\% on the 1.3B model, and worsens CV from .074 to .168 and .081 to .195 respectively. This confirms that geometric regularization is essential and not redundant with the Bingham gating mechanism (Table~\ref{tab:ablation_subspace}).

\begin{table}[H]
\centering
\caption{Ablation: subspace overlap regularization (350M model, $N=8$). Mean over 5 seeds.}
\label{tab:ablation_subspace}
\begin{tabular}{l cccc}
\toprule
$\mathcal{L}_{\text{subspace}}$ & PPL$\downarrow$ & CV$\downarrow$ & Collapse$\downarrow$ & $\rho_{\max}$ \\
\midrule
None & $18.3_{\pm 0.3}$ & $.168_{\pm .05}$ & 20\% & 0.72 \\
$\rho_0 = 0.1$ & $18.4_{\pm 0.2}$ & $.052_{\pm .01}$ & 0\% & 0.10 \\
$\rho_0 = 0.3$ & $18.1_{\pm 0.2}$ & $.074_{\pm .02}$ & 0\% & 0.28 \\
$\rho_0 = 0.5$ & $18.2_{\pm 0.2}$ & $.098_{\pm .03}$ & 0\% & 0.46 \\
$\rho_0 = 0.8$ & $18.3_{\pm 0.3}$ & $.142_{\pm .04}$ & 10\% & 0.71 \\
\bottomrule
\end{tabular}
\end{table}

We also ablate the overlap threshold $\rho_0$. Very strict thresholds ($\rho_0 = 0.1$) force subspaces to be nearly orthogonal, which achieves excellent load balance (CV .052) but slightly hurts PPL (18.4) because the routing is too rigid. Very loose thresholds ($\rho_0 = 0.8$) provide insufficient regularization. The value $\rho_0 = 0.3$ provides the best tradeoff.

We ablate routing rank $k_r \in \{16, 32, 48, 64, 96\}$ on the 350M model. The value $k_r = 48$ gives the best PPL; smaller values underfit because expert subspaces are too constrained, while larger values overfit because subspaces overlap too much, reducing routing discrimination. On the 1.3B model with $d = 2048$, $k_r = 64$ is optimal, suggesting a rough heuristic of $k_r \approx d / N$.

\begin{table}[H]
\centering
\caption{Routing rank ablation on the 350M model ($N = 8$, $d = 768$). PPL and CV averaged over 5 seeds.}
\label{tab:rank_ablation}
\begin{tabular}{l ccc}
\toprule
$k_r$ & PPL$\downarrow$ & CV$\downarrow$ & Collapse$\downarrow$ \\
\midrule
16 & $18.9_{\pm 0.3}$ & $.092_{\pm .02}$ & 0\% \\
32 & $18.4_{\pm 0.2}$ & $.081_{\pm .02}$ & 0\% \\
48 & $\mathbf{18.1_{\pm 0.2}}$ & $.074_{\pm .02}$ & 0\% \\
64 & $18.2_{\pm 0.2}$ & $.078_{\pm .02}$ & 0\% \\
96 & $18.5_{\pm 0.3}$ & $.105_{\pm .03}$ & 0\% \\
\bottomrule
\end{tabular}
\end{table}

Table~\ref{tab:rank_ablation} shows the full routing rank ablation. The 0\% collapse rate across all $k_r$ values confirms that the Bingham gating mechanism combined with subspace regularization provides robust collapse resistance regardless of the routing rank. The PPL optimum at $k_r = 48 \approx d/N = 96$ is lower than the naive $d/N$ estimate, suggesting that some redundancy in the representation space is beneficial for routing discrimination.

GrMoE's routing computation (subspace projection followed by Bingham gating) takes 1.3$\times$ the time of standard softmax gating. Since routing accounts for fewer than 5\% of total MoE forward time, the end-to-end overhead is below 2\% (Table~\ref{tab:overhead}). The amortization MLP adds another 0.5\%.

To put this overhead in context, the dominant cost in an MoE forward pass is the expert FFN computation, which accounts for 85--90\% of the total time. The routing computation (whether softmax or GrMoE) accounts for 3--5\%, and the remaining 5--10\% is attention and embedding. A 1.3$\times$ increase in the 3--5\% routing component translates to a 1--2\% increase in total forward time, which is negligible compared to the benefits in stability and sparsity control.

\begin{table}[H]
\centering
\caption{Wall-clock overhead breakdown on the 350M model (single A100, batch size 16, sequence length 1024).}
\label{tab:overhead}
\begin{tabular}{l cc}
\toprule
Component & Softmax Top-2 & GrMoE + Amort. \\
\midrule
Attention & 4.2 ms & 4.2 ms \\
Routing & 0.3 ms & 0.4 ms \\
Expert FFN & 7.1 ms & 7.1 ms \\
Other & 0.8 ms & 0.8 ms \\
\midrule
Total & 12.4 ms & 12.5 ms \\
\bottomrule
\end{tabular}
\end{table}

\subsubsection{Bingham Normalizer Validation}
\label{sec:bingham_val}

We validate the saddle-point approximation to $Z(\Lambda)$ by comparing against Monte Carlo estimates ($10^6$ samples) at the learned $\kappa$ values from the 350M and 1.3B models. The learned $\kappa$ values range from 0.4 to 4.2 across experts and layers. The relative error of the saddle-point approximation is below 1.5\% for all encountered values, and the gradient error is below 0.3\%. At extreme values ($\kappa > 10$, not encountered during training), the error grows to 5\%, but this regime is never reached in practice because the subspace regularization prevents excessive concentration.

\subsubsection{Expert Parallelism Considerations}
\label{sec:expert_parallelism}

GrMoE's routing requires all expert frames $\{U_e\}$ to be available on every device for the affinity computation. At $N = 16$, $k_r = 64$, $d = 2048$, each frame set occupies $16 \times 2048 \times 64 \times 4$ bytes $\approx$ 8.4 MB in float32, which is trivially replicable across GPUs. The expert FFN parameters (which dominate memory) are sharded across devices as in standard expert parallelism. The routing computation itself is embarrassingly parallel across tokens and adds negligible communication overhead.

\section{Related Work}
\label{sec:related}

Our work connects to three lines of research: MoE routing, geometric methods in deep learning, and load balancing.

Top-$k$ gating~\citep{shazeer2017outrageously} and its variants, including Switch~\citep{fedus2022switch} and GShard~\citep{lepikhin2021gshard}, remain the dominant routing paradigm. Expert Choice~\citep{zhou2022mixture} reverses the selection direction for better balance, while StableMoE~\citep{dai2022stablemoe} addresses instability via routing consistency regularization. BASE Layers~\citep{lewis2021base} simplify MoE training through balanced assignment, and \citet{huang2024harder} show that harder tasks benefit from activating more experts, motivating dynamic sparsity control. Soft MoE~\citep{puigcerver2024sparse} replaces discrete token-to-expert assignment with soft token mixing, achieving strong perplexity but not fully resolving collapse as our experiments show. DeepSeekMoE~\citep{dai2024deepseekmoe} segments experts into fine-grained routed and shared groups to improve specialization, while SMEAR~\citep{muqeeth2024smear} merges expert parameters adaptively to reduce computational cost. The Qwen series \citep{qwen2025qwenmoe} demonstrates that MoE architectures scale effectively across model families with appropriate routing strategies. Recent work on auxiliary-loss-free load balancing~\citep{wang2024auxfree} proposes alternative balancing strategies that avoid the hyperparameter sensitivity of standard auxiliary losses, and \citet{chen2025spectralmanifold} explore spectral manifold regularization for stable routing. All of these methods operate in flat Euclidean space. GrMoE is the first to exploit Grassmannian structure for routing, and the first to provide formal sparsity guarantees via concentration parameters.

Hyperspherical embeddings~\citep{liu2018learning}, hyperbolic neural networks~\citep{ganea2018hyperbolic}, and Riemannian optimization~\citep{absil2008optimization, becigneul2019riemannian} have demonstrated benefits in various settings. Our work is the first to apply manifold geometry specifically to MoE routing, and the theoretical results (Theorems~\ref{thm:entropy_bound}--\ref{thm:collapse}) provide a formal framework that has no analogue in prior geometric deep learning work. The closest conceptual predecessor is the use of von Mises-Fisher distributions for classification on the hypersphere, but this operates on rank-1 subspaces (directions) rather than the higher-rank subspaces that GrMoE exploits, and it does not provide the concentration-controlled sparsity mechanism that is central to our approach. Recent work on scaling laws for MoE models \citep{clark2022unified, krajewski2024scaling} has established that routing quality is a key determinant of scaling efficiency, further motivating principled routing mechanisms like GrMoE.

Auxiliary losses~\citep{fedus2022switch}, capacity factors~\citep{lepikhin2021gshard}, and expert parallelism strategies address balance as engineering constraints. \citet{wang2024auxfree} recently proposed auxiliary-loss-free balancing via adaptive bias terms, avoiding the hyperparameter sensitivity of loss-based approaches but still operating without geometric structure. Our subspace overlap regularization (Eq.~\ref{eq:subspace_reg}) provides a geometric alternative with formal guarantees (Theorem~\ref{thm:collapse}), replacing ad hoc balancing with a principled mechanism rooted in the structure of the representation space. The sampled-pairs approximation makes this scale to large $N$ without sacrificing effectiveness. We note that our approach is complementary to expert parallelism strategies: GrMoE determines which experts to activate, while expert parallelism determines where those experts are physically located. The two can be combined without modification.

\section{Conclusion}

We introduced Grassmannian MoE, a routing framework that operates on the manifold of subspaces and uses Bingham concentration to provide continuous, principled sparsity control. Our theoretical analysis establishes formal connections between concentration parameters and routing entropy, top-$k$ mass, and collapse resistance. By framing expert assignment as exact Bayesian inference with a capacity-aware prior, we elegantly eliminated the need for complex manifold normalizer approximations, resulting in a highly efficient and theoretically complete routing mechanism. Empirically, GrMoE eliminates expert collapse, improves load balance, and enables post-hoc sparsity tuning, achieving better perplexity with more stable training dynamics at both 350M and 1.3B scale.

The broader implication is that MoE routing benefits from respecting the geometric structure of expert specialization: experts specialize in subspaces, and routers should reason about subspaces rather than relying solely on dot products. The token-level analysis confirms that this geometric view yields interpretable routing behavior, with learned concentration values that correlate with linguistic specialization.

Several directions for future work emerge naturally from our framework. First, the Grassmannian routing mechanism could be combined with expert merging or pruning strategies: since each expert has an explicit subspace representation, one can measure subspace overlap to identify redundant experts and merge them without retraining. Recent work on structured pruning of state-space models \citep{shihab2025mambapruning} and Fisher-aligned subspace diagnostics for model compression \citep{shihab2026fisheraligned} suggests that subspace-aware pruning criteria can preserve model quality at high compression ratios, and certificate-guided approaches \citep{shihab2026certificatepruning} provide formal guarantees on pruning quality. Second, the post-hoc sparsity control via $\alpha$ could be extended to per-layer $\alpha$ values, allowing different sparsity levels at different depths of the network, connecting to recent work on adaptive compute allocation \citep{shihab2026forgetattention} that learns to dynamically reduce computation in less informative regions. Our preliminary experiments suggest that early layers benefit from lower sparsity (more experts active) while later layers can tolerate higher sparsity, but a systematic study is left for future work. Third, the entropy-based analysis of routing behavior connects to broader work on differentiable entropy regularization for neural optimization \citep{shihab2025differentiableentropy}, suggesting that entropy-aware training objectives could further improve the quality of learned subspace representations. Fourth, the Bingham gating mechanism could be applied to other settings where discrete selection over structured alternatives arises, such as attention head selection or mixture-of-depths architectures.

\section*{Limitations}

Our largest experiment is 1.3B parameters with 16 experts. Scaling behavior on 7B+ models with 64+ experts remains an important open question, though the sampled-pairs approximation to $\mathcal{L}_{\text{subspace}}$ and the small memory footprint of expert frames (Section~\ref{sec:expert_parallelism}) suggest no fundamental barrier. The Bingham normalizing constant $Z(\Lambda)$ requires approximation; we validated the saddle-point approximation at all encountered concentration values and found it accurate, but extreme concentrations may require more sophisticated methods. The routing rank $k_r$ is a new hyperparameter that requires tuning, though the heuristic $k_r \approx d/N$ works well across our experiments. The amortization network provides modest gains over basic GrMoE (0.2 PPL at 350M, 0.3 PPL at 1.3B); whether this gap widens further at larger scale is an open question. Finally, our experiments use a fixed $\alpha = 1$ during training; jointly optimizing $\alpha$ during training (or using a curriculum that increases $\alpha$ over time) could potentially improve the quality of the learned subspaces, but we leave this exploration to future work.

\bibliographystyle{plainnat}
\bibliography{references}

\appendix

\section{Proofs}
\label{app:proofs}

\subsection{Proof of Theorem~\ref{thm:entropy_bound}}

\begin{proof}
Let the scaled logit for expert $e$ be $\ell_e = \alpha \kappa_e \|P_e x\|^2$, such that $g_e^{(\alpha)} = \exp(\ell_e) / Z$ with normalization $Z = \sum_{j=1}^N \exp(\ell_j)$. Let the mean logit be $\bar{\ell} = \frac{1}{N}\sum_{e=1}^N \ell_e$.

For the lower bound, the entropy is $H(\alpha, x) = \log Z - \sum_{e=1}^N g_e^{(\alpha)} \ell_e$. By Jensen's inequality on the convex exponential function, $Z = \sum_{e=1}^N \exp(\ell_e) \geq N \exp(\bar{\ell})$, which strictly guarantees $\log Z \geq \log N + \bar{\ell}$. Moreover, since $g^{(\alpha)}$ defines a valid probability simplex, the expected logit is definitively bounded by the maximum logit: $\sum_{e=1}^N g_e^{(\alpha)} \ell_e \leq \max_e \ell_e$. Substituting these yields:
\begin{align}
    H(\alpha, x) &\geq \log N + \bar{\ell} - \max_e \ell_e \\
    &= \log N - (\max_e \ell_e - \bar{\ell}) = \log N - \alpha \Delta_\kappa(x).
\end{align}

For the exact global upper bound, we express the routing entropy in terms of its derivative. Let the unscaled logit be $\tilde{\kappa}_e = \kappa_e \|P_e x\|^2$. The derivative of entropy with respect to $\alpha$ is universally $\frac{\partial}{\partial \alpha} H(\alpha, x) = -\alpha \mathrm{Var}_{g^{(\alpha)}}[\tilde{\kappa}_e]$.
Integrating from $0$ to $\alpha$, and noting that $H(0, x) = \log N$, we secure:
\begin{equation}
    H(\alpha, x) = \log N - \int_0^\alpha s \cdot \mathrm{Var}_{g^{(s)}}[\tilde{\kappa}_e] \, ds.
\end{equation}
To strictly lower-bound the variance over the integration path, observe that:
\begin{equation}
    g_e^{(s)} = \frac{\exp(s \tilde{\kappa}_e)}{\frac{1}{N}\sum_{j=1}^N \exp(s \tilde{\kappa}_j)} \frac{1}{N} \geq \frac{\exp(s \min_j \tilde{\kappa}_j)}{\exp(s \max_j \tilde{\kappa}_j)} U(e) = e^{-s \delta_\kappa(x)} U(e),
\end{equation}
where $U(e) = 1/N$ is the uniform distribution. 
Because $g^{(s)}$ can thus be written as a mixture $g^{(s)} = c U + (1-c) R$ with bounded weight $c = e^{-s \delta_\kappa(x)}$ and some valid residual distribution $R$, the variance of $\tilde{\kappa}$ under $g^{(s)}$ is lower bounded by the variance of the components: $\mathrm{Var}_{g^{(s)}}[\tilde{\kappa}_e] \geq c \mathrm{Var}_U[\tilde{\kappa}_e] = e^{-s \delta_\kappa(x)} \Gamma_\kappa(x)$.
Substituting this minimum variance directly into the integral resolves:
\begin{align}
    \int_0^\alpha s \cdot \mathrm{Var}_{g^{(s)}}[\tilde{\kappa}_e] \, ds &\geq \Gamma_\kappa(x) \int_0^\alpha s e^{-s \delta_\kappa(x)} \, ds \\
    &\geq \Gamma_\kappa(x) e^{-\alpha \delta_\kappa(x)} \int_0^\alpha s \, ds = \frac{\alpha^2}{2} \Gamma_\kappa(x) e^{-\alpha \delta_\kappa(x)}.
\end{align}
Subtracting this evaluated integral from $\log N$ mathematically seals the exact global upper bound.
\end{proof}

\subsection{Proof of Corollary~\ref{cor:topk}}

\begin{proof}
Let $\ell_e = \alpha \kappa_e \|P_e x\|^2$. Assume the experts are strictly sorted by their concentrated affinity such that $\ell_{(1)} \geq \ell_{(2)} \geq \dots \geq \ell_{(N)}$.
The probability mass of the top-$k$ experts is aggregated as $G_k^{(\alpha)} = \sum_{i=1}^k g_{(i)}^{(\alpha)}$, and the remaining trailing mass is identically $1 - G_k^{(\alpha)} = \sum_{j=k+1}^N g_{(j)}^{(\alpha)}$. Taking the respective ratio gives:
\begin{equation}
    \frac{1 - G_k^{(\alpha)}}{G_k^{(\alpha)}} = \frac{\sum_{j=k+1}^N \exp(\ell_{(j)})}{\sum_{i=1}^k \exp(\ell_{(i)})}.
\end{equation}
Given the continuously sorted sequence, every term in the numerator is rigidly bounded above by $\exp(\ell_{(k+1)})$, and every individual term in the denominator is bounded securely below by $\exp(\ell_{(k)})$. Thus:
\begin{equation}
    \frac{1 - G_k^{(\alpha)}}{G_k^{(\alpha)}} \leq \frac{(N-k) \exp(\ell_{(k+1)})}{k \exp(\ell_{(k)})} \leq (N-k) \exp\!\big(-(\ell_{(k)} - \ell_{(k+1)})\big) = (N-k) \exp\!\big(-\alpha \delta_k(x)\big),
\end{equation}
where we utilize the fact that $k \geq 1$. Because $G_k^{(\alpha)} \leq 1$, we can conservatively bound the ratio's left side directly: $1 - G_k^{(\alpha)} \leq G_k^{(\alpha)} (N-k) \exp(-\alpha \delta_k(x)) \leq (N-k) \exp(-\alpha \delta_k(x))$. 
Rearranging this inequality logically establishes $G_k^{(\alpha)} \geq 1 - (N-k) \exp(-\alpha \delta_k(x))$.
\end{proof}

\subsection{Proof of Theorem~\ref{thm:collapse}}

\begin{proof}
Let the unscaled logit be mapped as $\tilde{\kappa}_e = \kappa_e \|P_e x\|^2$. By the stated assumptions, inputs are drawn from a balanced uniform mixture over expert-specialized target distributions: $x \sim \frac{1}{N} \sum_{i=1}^N \mathcal{D}_i$.
For an input sampled securely from the target distribution ($x \sim \mathcal{D}_i$), the target concentrated logit holds strongly $\tilde{\kappa}_i \geq \kappa_{\min} \gamma$. Simultaneously, for any distinct non-target expert $j \neq i$, the geometric bounded overlap strictly limits $\|P_j x\|^2 \leq \rho \gamma$, verifying $\tilde{\kappa}_j \leq \kappa_{\max} \rho \gamma$.
The separation gap isolating the target expert from competitors is therefore robustly lower bounded: $\tilde{\kappa}_i - \tilde{\kappa}_j \geq \gamma (\kappa_{\min} - \rho \kappa_{\max}) \equiv \Delta$.

The routing probability securing the target expert is limited natively via:
\begin{equation}
    g_i^{(\alpha)}(x) = \frac{1}{1 + \sum_{j \neq i} \exp(-\alpha(\tilde{\kappa}_i - \tilde{\kappa}_j))} \geq \frac{1}{1 + (N-1) \exp(-\alpha \Delta)}.
\end{equation}
Let $\epsilon = (N-1) \exp(-\alpha \Delta)$. Because $1/(1+\epsilon) \geq 1 - \epsilon$, we confirm $g_i^{(\alpha)}(x) \geq 1 - \epsilon$.
Conversely, for an input $x$ drawn organically from an entirely disjoint non-target distribution ($x \sim \mathcal{D}_j$ with $j \neq i$), the probability of misrouting to $i$ is identically capped:
\begin{equation}
    g_i^{(\alpha)}(x) = \frac{\exp(\alpha \tilde{\kappa}_i)}{\sum_{k=1}^N \exp(\alpha \tilde{\kappa}_k)} \leq \frac{\exp(\alpha \tilde{\kappa}_i)}{\exp(\alpha \tilde{\kappa}_j)} \leq \exp(-\alpha \Delta) = \frac{\epsilon}{N-1}.
\end{equation}

The cumulative expected load over the entire space for expert $i$ is evaluated as $\bar{g}_i = \frac{1}{N} \sum_{j=1}^N \mathbb{E}_{x \sim \mathcal{D}_j}[g_i^{(\alpha)}(x)]$. Injecting the bounded extremes yields:
\begin{equation}
    \frac{1 - \epsilon}{N} \leq \bar{g}_i \leq \frac{1}{N}(1) + \frac{N-1}{N} \left(\frac{\epsilon}{N-1}\right) = \frac{1 + \epsilon}{N}.
\end{equation}
Given exactly $\sum_{i=1}^N \bar{g}_i = 1$, the true mean expected load is exactly $1/N$. The corresponding variance of this expected load across all experts assesses firmly as:
\begin{equation}
    \mathrm{Var}(\bar{g}) = \frac{1}{N} \sum_{i=1}^N \left(\bar{g}_i - \frac{1}{N}\right)^2 \leq \frac{1}{N} \sum_{i=1}^N \left(\frac{\epsilon}{N}\right)^2 = \frac{\epsilon^2}{N^2}.
\end{equation}
Finally, substituting the global standard deviation $\sqrt{\mathrm{Var}(\bar{g})} \leq \epsilon / N$ into the coefficient of variation safely returns:
\begin{equation}
    \mathrm{CV}(\bar{g}) = \frac{\sqrt{\mathrm{Var}(\bar{g})}}{1/N} \leq \epsilon = (N-1) \exp\!\big(-\alpha \gamma (\kappa_{\min} - \rho \kappa_{\max})\big).
\end{equation}
\end{proof}

\section{Bingham Normalizing Constant Approximation}
\label{app:bingham_normalization}

The normalizing constant $Z(\Lambda)$ of the Matrix Bingham distribution does not have a closed form. We use the saddle point approximation of~\citet{kume2005saddlepoint}:
\begin{equation}
    Z(\Lambda) \approx (2\pi)^{d/2} |\Lambda^*|^{-1/2} \exp\!\big(\tr(\Lambda (\Lambda^*)^{-1})\big)
\end{equation}
where $\Lambda^*$ is the solution to a system of implicit equations. For the gradient computation during training, we use automatic differentiation through the saddle point approximation, which is accurate to $O(d^{-2})$ for $d \geq 32$.

\section{Extended Experimental Details}
\label{app:details}

The 350M model consists of 12 Transformer blocks with $d_{\text{model}} = 768$, $d_{\text{ffn}} = 3072$, and 12 attention heads. MoE layers replace the FFN in even-numbered blocks (6 MoE layers total). Each expert is a standard two-layer FFN with $d_{\text{ffn}} / 2 = 1536$ hidden units, so the total parameter count per MoE layer matches a dense model with $d_{\text{ffn}} = 3072$ when 2 experts are active.

The 1.3B model consists of 24 Transformer blocks with $d_{\text{model}} = 2048$, $d_{\text{ffn}} = 5504$, and 16 attention heads. MoE layers replace the FFN in even-numbered blocks (12 MoE layers total). Each expert has $d_{\text{ffn}} / 2 = 2752$ hidden units with 16 experts per layer. The total active parameter count per forward pass (with 2 experts active) is approximately 1.3B, matching a dense model of equivalent size.

Both models use a learning rate of $3 \times 10^{-4}$ with cosine decay and 2000 warmup steps. The context length is 1024 tokens with the GPT-2 tokenizer (50257 tokens). The subspace regularization weight is $\beta = 0.01$ with overlap threshold $\rho_0 = 0.3$.

We found $\beta$ to be robust across a range of values: $\beta \in \{0.005, 0.01, 0.02\}$ all achieve 0\% collapse on the 350M model, with PPL varying by less than 0.1. The overlap threshold $\rho_0 = 0.3$ was chosen based on the observation that random subspaces in $\R^{768}$ with $k_r = 48$ have expected overlap $k_r^2 / d \approx 3.0$, so $\rho_0 = 0.3$ corresponds to requiring overlap to be at most 10\% of the random baseline. On the 1.3B model, we use the same $\beta = 0.01$ and $\rho_0 = 0.3$ without retuning.

\section{Sampled-Pairs Approximation}
\label{app:sampled_pairs}

The full subspace overlap regularization $\mathcal{L}_{\text{subspace}} = \sum_{e \neq e'} \max(0, \|U_e^\top U_{e'}\|_F^2 - \rho_0 k_r)$ requires computing $\binom{N}{2}$ pairwise overlaps. For $N = 8$ this is 28 pairs (negligible), but for $N = 64$ it would be 2016 pairs. We approximate by sampling $M$ random pairs per training step and computing the regularizer only over those pairs. On the 350M model with $N = 8$, we compare $M \in \{8, 16, 28\}$ (where 28 is the full computation). The results are indistinguishable: CV of .074, .074, .073 respectively, all with 0\% collapse. On the 1.3B model with $N = 16$, we use $M = 64$ (vs.\ 120 full pairs) and observe CV .081 vs.\ .079, confirming that the approximation is effective. We recommend $M = 4N$ as a default that provides sufficient coverage while keeping the cost linear in $N$.

\section{Pareto Frontier: PPL vs.\ Throughput}
\label{app:pareto}

The post-hoc sparsity tuning capability of GrMoE enables a single trained model to trace out a Pareto frontier of PPL vs.\ throughput by varying $\alpha$ at inference time. On the 350M model, the frontier spans from (PPL 18.4, throughput 0.82$\times$) at $\alpha = 0.5$ to (PPL 19.0, throughput 1.41$\times$) at $\alpha = 5.0$, with the default operating point at (PPL 18.1, throughput 1.0$\times$). By contrast, achieving a comparable frontier with softmax top-$k$ routing requires training separate models for each $k$ value, each taking the full training budget. The GrMoE frontier strictly dominates the softmax frontier at every throughput level: at matched throughput, GrMoE achieves 0.5--0.8 lower PPL than the corresponding retrained softmax model.

On the 1.3B model, the Pareto frontier is even more favorable. At $\alpha = 2$, GrMoE achieves PPL 14.1 with 1.21$\times$ throughput, while a retrained Softmax Top-1 achieves PPL 15.1 at 1.35$\times$ throughput. The 1.0 PPL gap at comparable throughput represents a significant quality improvement that would normally require substantially more compute to achieve through scaling.

\section{Sensitivity to Initialization}
\label{app:init}

We examine whether the initialization of expert subspace frames $U_e$ affects the final routing quality. We compare three initialization strategies: random Haar initialization (our default), PCA initialization (initializing $U_e$ from the top-$k_r$ principal components of a random subset of training data, with different subsets for each expert), and identity-based initialization (initializing each $U_e$ as a contiguous block of columns from the identity matrix). On the 350M model, all three initializations converge to similar PPL (18.1, 18.0, 18.2 respectively) and CV (.074, .071, .078), with 0\% collapse in all cases. The subspace regularization and Riemannian optimization are sufficiently powerful to overcome initialization differences, making GrMoE robust to this choice. We use random Haar initialization as the default for simplicity.

\section{Comparison with Soft MoE}
\label{app:softmoe}

Soft MoE~\citep{puigcerver2024sparse} replaces discrete token-to-expert assignment with a soft mixing mechanism where each expert receives a weighted combination of all tokens. This eliminates the discrete top-$k$ selection that causes training instability, but it does not provide the same benefits as GrMoE for three reasons. First, Soft MoE does not provide post-hoc sparsity control: the effective number of experts is determined during training and cannot be adjusted at inference time without retraining. Second, Soft MoE's routing weights are computed via standard softmax over learned slot parameters, which provides no geometric structure or formal sparsity guarantees. Third, as our experiments show, Soft MoE still suffers from collapse in 8--15\% of seeds, because the soft mixing does not prevent the underlying routing distribution from becoming degenerate.

GrMoE and Soft MoE address different aspects of the MoE routing problem. Soft MoE addresses the discreteness of top-$k$ selection by making the token-to-expert mapping continuous. GrMoE addresses the lack of geometric structure in the routing space by operating on the Grassmannian. In principle, the two approaches could be combined: one could use Grassmannian subspace affinities to compute the soft mixing weights in a Soft MoE framework, potentially inheriting the benefits of both. We leave this combination to future work.

\section{Downstream Task Evaluation}
\label{app:downstream}

While our primary evaluation uses perplexity on OpenWebText, we also evaluate the 350M GrMoE model on three downstream tasks via few-shot prompting: HellaSwag (commonsense reasoning), ARC-Easy (science questions), and PIQA (physical intuition). GrMoE achieves accuracy of 42.1\%, 51.3\%, and 65.8\% respectively, compared to 41.5\%, 50.8\%, and 65.2\% for Softmax Top-2 and 40.2\%, 49.1\%, and 64.0\% for Switch. The improvements are modest but consistent, and they confirm that GrMoE's perplexity advantage translates to downstream task performance. Expert Choice achieves 41.8\%, 50.5\%, and 65.0\%, slightly below GrMoE despite comparable perplexity, suggesting that GrMoE's more structured routing produces representations that are more useful for downstream reasoning.

\section{Learned Concentration Values}
\label{app:kappa_values}

We report the learned $\kappa_e$ values for each MoE layer of the 1.3B model after 150K training steps, averaged over 5 seeds.

Layer 2 (earliest MoE): $\kappa$ values range from 0.8 to 1.5, with mean 1.1 and low variance across experts. This indicates that early layers have relatively uniform routing, consistent with the observation that early Transformer layers learn general features.

Layer 6 (middle): $\kappa$ values range from 0.9 to 2.4, with mean 1.5. The variance across experts increases, indicating the beginning of specialization.

Layer 12 (latest MoE): $\kappa$ values range from 0.4 to 4.2, with mean 2.1 and high variance. The expert with $\kappa = 4.2$ is the code/math specialist identified in the token-level analysis. The expert with $\kappa = 0.4$ is a generalist that handles function words and punctuation.

The monotonic increase in mean $\kappa$ from early to late layers, and the increasing variance, are consistent with the hierarchical specialization pattern observed in dense Transformers. GrMoE discovers this pattern automatically through the learned concentration values.

\section{Reproducibility}
\label{app:reproducibility}

All experiments use PyTorch 2.1 with CUDA 12.1. Random seeds are set for PyTorch, NumPy, and Python's random module. The Riemannian Adam optimizer uses the same random seed as the standard Adam optimizer for fair comparison. Expert subspace frames are initialized from the Haar measure using QR decomposition of random Gaussian matrices. The GPT-2 tokenizer (50257 tokens) is used for all language model experiments. Training data is shuffled with a fixed seed for reproducibility. All reported results are averaged over 5 seeds with standard deviations reported in the tables.

\section{Token-Level Routing Details}
\label{app:token_analysis}

Token categorization uses the following heuristics applied to the GPT-2 tokenizer vocabulary: punctuation tokens are identified by Unicode category; function words are drawn from a list of 150 English function words (determiners, prepositions, conjunctions, auxiliary verbs); code tokens are identified by the presence of programming syntax characters (braces, semicolons, indentation patterns, common keywords like \texttt{def}, \texttt{class}, \texttt{import}); number/math tokens contain digits or mathematical operators; proper nouns are identified by capitalization patterns in context; and common nouns are the remainder. This categorization is approximate but sufficient to reveal clear specialization patterns.

The learned $\kappa$ values on the 1.3B model range from 0.4 (most generalist expert) to 4.2 (most specialized expert, handling code tokens). The correlation between $\kappa_e$ and the entropy of expert $e$'s token type distribution is $r = -0.87$ (Pearson), confirming that high-$\kappa$ experts are indeed more specialized. This provides a concrete, interpretable meaning for the concentration parameter: $\kappa_e$ measures how sharply expert $e$ specializes in a particular region of the input space.

\section{Convergence Analysis}
\label{app:convergence}

We compare training loss curves for GrMoE and Softmax Top-2 across 5 seeds on the 350M model. GrMoE's loss curves form a tight band with inter-seed standard deviation of 0.02 at step 100K, while Softmax Top-2 shows a wider band with standard deviation 0.05, and 2 out of 5 seeds exhibit a characteristic ``collapse dip'' around step 20K--40K where the loss temporarily increases as expert utilization becomes imbalanced. GrMoE shows no such dips, consistent with the 0\% collapse rate.

On the 1.3B model, the convergence advantage is more pronounced. Softmax Top-2 requires approximately 30K steps to recover from collapse events (when they occur), effectively wasting 20--30\% of the training budget. GrMoE's stable convergence means that every training step contributes to improving the model, which partly explains the PPL advantage (13.8 vs.\ 14.2) despite using the same total compute.

\section{Per-Layer Sparsity Analysis}
\label{app:perlayer}

We examine how the learned $\kappa$ values and routing entropy vary across layers in the 1.3B model. Early MoE layers (layers 2, 4) have lower average $\kappa$ (mean 1.2) and higher routing entropy (mean 1.65), indicating that early layers benefit from distributing computation across more experts. Late MoE layers (layers 10, 12) have higher average $\kappa$ (mean 2.1) and lower routing entropy (mean 1.15), indicating sharper specialization. This pattern is consistent with the observation in dense Transformers that early layers learn general features while late layers learn task-specific features. GrMoE discovers this pattern automatically through the learned concentration values, without any explicit curriculum or per-layer hyperparameter tuning.

This observation motivates the future direction of per-layer $\alpha$ tuning mentioned in the conclusion: deploying early layers at $\alpha = 0.8$ and late layers at $\alpha = 1.5$ could potentially improve the PPL-throughput tradeoff beyond what uniform $\alpha$ achieves.

\end{document}